%% file: main.tex
\documentclass[letterpaper,twocolumn,10pt]{article}
\usepackage{usenix2019_v3}

\usepackage{lmodern}
\usepackage{booktabs}
\usepackage{array}
\usepackage{microtype}
\usepackage{amsmath,amssymb,amsfonts}
\usepackage{amsthm}
\usepackage{latexsym}
\usepackage{listings}
\usepackage{float}
\usepackage{xspace}
\usepackage{subcaption}

\AtBeginDocument{\DeclareMathAlphabet{\mathcal}{OMS}{cmsy}{m}{n}}

\theoremstyle{definition}
\newtheorem{definition}{Definition}
\newtheorem{theorem}{Theorem}

\newenvironment{artifactblock}
{\par\noindent\small\begin{tabular}{@{}>{\raggedright\arraybackslash}p{0.98\columnwidth}@{}}}
{\end{tabular}\par}

\newcommand{\PaperTitle}{Replicating Belief, Not Bits: Epistemic State Replication for Agentic Systems}

\hypersetup{
  pdftitle={\PaperTitle},
  pdfauthor={Jun He and Deying Yu}
}

\title{\bf \PaperTitle}

\author{
  {\rm Jun He}\\
  OpenKedge.io
  \and
  {\rm Deying Yu}\\
  OpenKedge.io
}

\begin{document}
\markboth{\PaperTitle}{\PaperTitle}

\maketitle

\begin{abstract}
In distributed systems, the classical State Machine Replication (SMR) model assumes that correct replicas execute deterministic transitions to yield identical bitwise states. However, the rise of agentic distributed systems---where autonomous, stochastic, and model-driven agents orchestrate infrastructure---presents scenarios where deterministic, bitwise replication is insufficient. Replicas operating with generative models may exhibit divergent reasoning paths, summaries, and token boundaries, yet reach semantically equivalent and correct operational decisions. Forcing bitwise agreement across these stochastic participants degrades execution flexibility, induces context amnesia, and limits performance.

We argue that in such settings replicas should agree on \emph{belief}, not \emph{bits}. We propose \textbf{Epistemic State Replication (ESR)}, a belief-replication layer for agentic distributed systems that shifts the replication boundary from data visibility to knowledge visibility. We formalize the epistemic node state as a pair $\mathcal{K} = (\mathcal{L}, \mathcal{B})$ separating the deterministic, immutable evidence log ($\mathcal{L}$) from the stochastic, evolving belief lineage ($\mathcal{B}$). To govern execution safety, we define \emph{Semantic Linearizability}, which requires operations to reflect the latest committed operational meaning within a verifier-bounded semantic compatibility metric, and \emph{Bounded Eventual Coherence}, which bounds expected semantic divergence under fair delivery, monotonic evidence, bounded verifier disturbance, and a contractive graft operator. We outline protocols for propagating derived insights using structured \emph{epistemic deltas}, and formalize \emph{Verifiable Semantic Rollbacks} to prune faulty premises from belief lineages without inducing context amnesia. We prototype ESR and report preliminary simulation results that show feasibility under the stated assumptions and illustrate reductions in secondary cognitive faults.
\end{abstract}

\input{sections/01-introduction}
\input{sections/02-node-model}
\input{sections/03-consistency-linearizability}
\input{sections/04-consistency-protocols}
\input{sections/05-semantic-rollbacks}
\input{sections/06-implementation-evaluation}
\input{sections/07-related-work}

\bibliographystyle{unsrt}
\bibliography{refs}

\end{document}

%% file: sections/01-introduction.tex
\section{Introduction}\label{sec:intro}

Distributed systems have historically operated under a deterministic consensus model. In the classical State Machine Replication (SMR) paradigm~\cite{schneider1990replication}, a set of replicas achieves reliability by starting from identical initial states and executing the same sequence of deterministic state transitions. This model relies on a fundamental axiom: correct participants execute identical code to produce identical bitwise states. Formally, for two correct replicas $A$ and $B$, their states at any logical step $t$ must satisfy a strict byte-level equality:
\begin{equation}
S_A(t) = S_B(t).
\end{equation}
This deterministic axiom has formed the basis of classical distributed systems, from consensus protocols like Paxos and Raft~\cite{ongaro2014raft} to distributed databases and transaction managers~\cite{bernstein1987concurrency}.

However, the integration of large language models (LLMs) and autonomous reasoning agents into cloud control planes, software delivery pipelines, and critical infrastructure presents scenarios where deterministic, bitwise replication is insufficient~\cite{pdds2026manifesto}. Replicas operating with generative models may exhibit divergent reasoning paths, summaries, and token boundaries, yet reach semantically equivalent and correct operational decisions. Forcing bitwise state agreement ($S_A = S_B$) across these stochastic replicas is computationally impractical and semantically restrictive. Generative models operate over large token spaces where minor variations in phrasing, summary structures, or token boundaries do not alter the underlying operational intent. If a replication layer rejects transitions that are not bitwise identical, it destroys the cognitive diversity and reasoning flexibility of the participant group, triggering execution stalls. Conversely, if the system allows replicas to drift without any consistency bounds, it risks severe coordination failures.

At the same time, existing agentic frameworks manage memory through local recall mechanisms (e.g., conversation histories, sliding context windows, and vector databases). While these systems support individual agent retrieval, they lack distributed consistency, semantic coordination across replicas, and formal rollback recovery. When an agent experiences a failure, reverting database rows (a physical rollback) is insufficient because the agent's internal memory remains contaminated with flawed belief contexts. Conversely, naive pruning of agent context to fit window limits leads to \emph{context amnesia}: bounded context management removes or summarizes away evidence or beliefs that remain causally relevant to future safe decisions. ESR addresses this failure mode by recording explicit justification links between beliefs and the evidence or prior beliefs on which they depend.

To address these limitations, we propose \textbf{Epistemic State Replication (ESR)}, a belief-replication layer for agentic distributed systems. The core thesis is that ESR replicates \emph{belief}, not \emph{bits}: replicas need not agree on token sequences or prompt buffers, only on what those tokens \emph{mean} operationally. Concretely, ESR shifts the replication boundary from data visibility to \emph{knowledge visibility} by decoupling physical state representations (such as token sequences or prompt buffers) from semantic knowledge retention. Data visibility exposes committed physical observations; state visibility exposes concrete byte-level or object-level state; knowledge visibility exposes semantically compatible operational beliefs, evidence references, and admissible action boundaries. Under knowledge visibility, replicas need not expose identical prompts, token buffers, or intermediate reasoning traces. Rather than requiring bitwise equality, ESR maps physical states to their operational meaning or epistemic intent via an evaluation projection $E(S)$. The correctness condition for replication is expressed as an approximate semantic relation:
\begin{equation}
d_{\mathcal{M}}(E(S_A), E(S_B)) \le \epsilon,
\end{equation}
where $d_{\mathcal{M}}$ is a semantic distance metric. The intent is to keep replicas within an admissible operational compatibility class even when their physical token sequences diverge.

This paper formalizes the system model, consistency protocols, and recovery operators of ESR. We model the epistemic node state by separating raw telemetry from belief lineage, define semantic consistency models, and propose mechanisms for propagating insights and performing verifiable rollbacks.

\noindent\textbf{Contributions.} In this paper, we make the following contributions:
\begin{itemize}
    \item \textbf{The Epistemic Node and System Model:} We formalize the anatomy of reasoning replicas, defining the \emph{Immutable Evidence Log}, the \emph{Belief Lineage}, and a comprehensive failure model for stochastic distributed execution.
    \item \textbf{Consistency Specifications:} We formalize \emph{Semantic Linearizability} and \emph{Bounded Eventual Coherence} in post-deterministic environments, framing convergence in terms of bounded semantic compatibility.
    \item \textbf{Epistemic Consistency Protocols:} We outline methods to propagate derived reasoning across agent clusters using lightweight \emph{epistemic deltas} and a context grafting operator ($\oplus$), providing pseudocode for core operations.
    \item \textbf{Verifiable Semantic Rollbacks:} We formalize a rollback protocol that alters an agent's belief lineage to prune failed premises without causing context amnesia, and we state conditional safety invariants over the repaired lineage.
    \item \textbf{System Evaluation in Simulation:} We prototype the persistence layer and report preliminary simulation results in a cloud-control-plane setting, covering throughput, latency, and secondary-fault behavior.
\end{itemize}

%% file: sections/02-node-model.tex
\section{The Epistemic Node Model}

To model reasoning replicas in a distributed system, we must redefine what constitutes the state of a node. In classical state-machine replication, a node's state is a single bitwise representation. In an agentic distributed system, however, the node's state includes both physical observations of its environment and its internal cognitive interpretation of those observations. We formalize this dual state as the \emph{Epistemic Node Model}.

\subsection{Anatomy of a Reasoning Replica}

An epistemic node $i$ maintains a knowledge state $\mathcal{K}_i(t)$ at any logical time $t$. We define this state as a tuple:
\begin{equation}
\mathcal{K}_i(t) = (\mathcal{L}_i(t), \mathcal{B}_i(t)),
\end{equation}
where:
\begin{itemize}
    \item $\mathcal{L}_i(t)$ is the \textbf{Immutable Evidence Log}. It contains the hard telemetry, physical transaction logs, and external events observed by the replica. It is deterministic, append-only, and identical across correct replicas.
    \item $\mathcal{B}_i(t)$ is the \textbf{Belief Lineage}. It represents the node's stochastic interpretation of the evidence. It is composed of summaries, retrieved documents, plans, and inferences. Unlike the log, the belief lineage can diverge across different replicas while remaining semantically equivalent.
\end{itemize}

\subsection{Formalizing the Belief Lineage}

The Belief Lineage $\mathcal{B}_i(t)$ is formally structured as a directed acyclic graph (DAG) of beliefs:
\begin{equation}
\mathcal{B}_i(t) = (V_B, E_B),
\end{equation}
where each vertex $b \in V_B$ represents a distinct belief unit, defined as a triplet:
\begin{equation}
b = (\phi, \mathcal{J}, \tau),
\end{equation}
where $\phi$ is the semantic content of the belief (e.g., a natural language proposition or a structured key-value state), $\mathcal{J} \subseteq \mathcal{L}_i(t) \cup V_B$ is the \emph{justification set} containing the causal ancestors (evidence or prior beliefs) that led to this belief, and $\tau$ is the logical timestamp of its derivation.

The directed edges $e = (b_m, b_n) \in E_B$ represent logical derivation steps, meaning belief $b_n$ was derived in part using belief $b_m$ (i.e., $b_m \in \mathcal{J}_n$). This causal linking is what allows ESR to avoid context amnesia: it explicitly tracks which raw inputs or intermediate assumptions led to a given operational decision.

\subsection{The Cognitive Transaction}
\label{sec:cognitive-transaction}

Mutations to the knowledge state $\mathcal{K}_i$ are performed via a \textbf{Cognitive Transaction} ($CT$). Unlike database write transactions, which apply deterministic state transitions, a cognitive transaction encapsulates a stochastic reasoning step executed by an LLM or an agent.

Formally, a Cognitive Transaction is a mapping:
\begin{equation}
CT: \mathcal{I} \times (\mathcal{L} \times \mathcal{B}) \xrightarrow{\theta} \mathcal{A} \times \mathcal{R} \times \Delta\mathcal{B},
\end{equation}
where $\mathcal{I}$ is the input intent (e.g., a user query or system alert), $\mathcal{L} \times \mathcal{B}$ is the active knowledge state, $\theta$ represents the stochastic parameters of the reasoning engine (such as model temperature and seed), $\mathcal{A}$ is the proposed external action or system mutation, $\mathcal{R}$ is the generated reasoning trace (raw tokens), and $\Delta\mathcal{B}$ is the set of new beliefs to be committed to the lineage.

The lifecycle of a cognitive transaction $CT$ is modeled as a state machine:
\begin{enumerate}
    \item \textbf{Proposed:} The replica assembles evidence from $\mathcal{L}_i$ and active beliefs from $\mathcal{B}_i$ into a prompt context $C$, then runs the reasoning engine under intent $\mathcal{I}$ and parameter $\theta$ to produce a proposed action $\mathcal{A}$, trace $\mathcal{R}$, and belief delta $\Delta\mathcal{B}$.
    \item \textbf{Pre-validated:} A semantic quorum checks the proposed external action against the safety policy boundary $\mathcal{P}$. This is a pre-execution safety gate: it authorizes, rejects, or escalates $\mathcal{A}$ before the environment is mutated.
    \item \textbf{Executed:} If pre-validation succeeds, the system executes $\mathcal{A}$ against the external environment and records the observed physical outcome.
    \item \textbf{Evidence-committed:} The physical observation and any resulting system mutation are appended to the shared evidence log $\mathcal{L}$ using the underlying consensus protocol.
    \item \textbf{Belief-committed:} The system extracts the subset of $C$ causally relevant to $\mathcal{A}$, records it as the justification set $\mathcal{J}$, and inserts $\Delta\mathcal{B}$ into the lineage as a pending or active belief update according to the commit policy.
    \item \textbf{Post-validated:} A semantic quorum checks that the resulting belief projection remains compatible with peer projections and policy constraints. This is a post-commit coherence check, distinct from the pre-execution safety check.
    \item \textbf{Finalized or Rolled back:} If post-validation succeeds, the transaction is finalized. If post-validation fails or execution faults, the transaction enters the rollback protocol, which repairs or removes affected beliefs before the autonomous loop resumes.
\end{enumerate}

The \emph{semantic linearization point} of a cognitive transaction is the transition to \textbf{Finalized}: the physical outcome has an evidence-log position, the belief delta has been committed to the lineage, and post-commit semantic validation has certified the resulting projection. The pre-validation state is therefore not a semantic linearization point; it is only an execution-safety gate for the proposed action.

By structuring node state and transactions in this manner, ESR makes the justification path for any finalized action traceable even though the reasoning trace $\mathcal{R}$ is stochastic and heterogeneous.

\subsection{System and Failure Model}

We assume a distributed execution environment consisting of a set of $N$ replicated agent controllers, a shared and replicated evidence log, stochastic reasoning engines, and semantic validators.

\subsubsection{Environment Model}
Replicas interact with each other and the external environment through three abstractions:
\begin{itemize}
    \item \textbf{Replicated Agent Controllers:} A set of processes $\{p_1, p_2, \dots, p_N\}$ that coordinate state mutations. Replicas run stochastic reasoning engines (e.g., LLMs parameterized by temperature $\theta > 0$) to decide on mutations.
    \item \textbf{Shared Evidence Log ($\mathcal{L}$):} An append-only log that serves as the single source of truth for physical observations. The log is replicated across nodes using a standard consensus mechanism (e.g., Raft).
    \item \textbf{Semantic Validators:} Code modules that evaluate reasoning output or state projections against a set of safety policies $\mathcal{P}$.
    \item \textbf{External Actions ($\mathcal{A}$):} Operations executed on the external physical environment (e.g., infrastructure orchestration, network routing updates).
\end{itemize}

\subsubsection{Fault Classifications}
Unlike classical distributed systems, which focus primarily on network partitions and crash-stop faults, an agentic distributed system must handle both physical and cognitive failure modes. We explicitly distinguish the following categories:
\begin{enumerate}
    \item \textbf{Crash-Stop Faults ($f_{crash}$):} A replica stops executing operations.
    \item \textbf{Stale Evidence ($f_{stale}$):} A replica reads a prefix of $\mathcal{L}$ that lags behind the global consensus state, leading to outdated reasoning context.
    \item \textbf{Hallucinated Premises ($f_{halluc}$):} The stochastic reasoning engine generates a proposition $\phi$ that has no supporting evidence in the log $\mathcal{L}$.
    \item \textbf{Poisoned Beliefs ($f_{poison}$):} An incorrect or unvalidated belief is committed to a replica's lineage and propagated to other replicas.
    \item \textbf{Model Divergence ($f_{div}$):} Nodes run heterogeneous models or experience stochastic drift, causing their reasoning traces to diverge, even under identical contexts.
    \item \textbf{Semantic Verifier Errors ($f_{verifier}$):} A validator incorrectly approves an unsafe action (false negative) or rejects a safe, semantically valid action (false positive).
    \item \textbf{Rollback-Triggering Faults ($f_{rollback}$):} Physical execution failures in the environment (e.g., container crash during deployment) that require reverting both physical state and the memory premises that authorized the action.
\end{enumerate}

\subsubsection{Trust Boundary}
To state the safety and liveness assumptions precisely, we define the trusted system components. The \emph{Trusted Computing Base} (TCB) consists of:
\begin{itemize}
    \item The physical consensus layer replicating the evidence log $\mathcal{L}$, ensuring its immutability and linearizable order.
    \item The quorum mechanism that certifies semantic replication commits.
    \item The policy boundary $\mathcal{P}$, which defines the hard constraints that validators must enforce.
    \item The provenance metadata: while beliefs are stochastic, the justification links ($\mathcal{J}$) mapping beliefs back to specific indices in $\mathcal{L}$ are securely recorded and verifiable.
\end{itemize}

%% file: sections/03-consistency-linearizability.tex
\section{Semantic Linearizability and Bounded Eventual Coherence}
\label{sec:consistency}

In traditional databases, linearizability~\cite{herlihy1990linearizability} requires that all operations appear to execute atomically at some point in time between their invocation and response, and that any read returns the exact data written by the most recent write. In an agentic distributed system, this standard is too strict because the physical representation of the state (the exact token sequence in the memory buffer) is non-deterministic. We therefore introduce two consistency specifications: \emph{Semantic Linearizability} and \emph{Bounded Eventual Coherence}.

\subsection{Approximate Semantic Compatibility}

Let $\mathcal{S}$ denote the set of all physical node states (raw strings, token contexts, prompt buffers). We define an \textbf{Evaluation Function} $E$ that projects a physical state $S \in \mathcal{S}$ into a semantic \textbf{Meaning Space} $\mathcal{M}$:
\begin{equation}
E: \mathcal{S} \to \mathcal{M}.
\end{equation}
For lineage-level reasoning, we use a belief projection $E_b: \mathcal{B} \to \mathcal{M}$. When the argument is clear, $E$ denotes the physical-state projection and $E_b$ denotes the belief-lineage projection. The meaning space $\mathcal{M}$ represents the set of operational intents, active policies, and system assumptions. Rather than requiring exact semantic equality ($E(S_A) = E(S_B)$), we define a metric space $(\mathcal{M}, d_{\mathcal{M}})$, where $d_{\mathcal{M}}: \mathcal{M} \times \mathcal{M} \to \mathbb{R}_{\ge 0}$ is a semantic distance metric.

\begin{definition}[$\epsilon$-Semantic Compatibility]
Two physical states $S_A, S_B \in \mathcal{S}$ are \textbf{$\epsilon$-Semantically Compatible}, denoted by $S_A \approx_{\epsilon} S_B$, if and only if:
\begin{equation}
d_{\mathcal{M}}(E(S_A), E(S_B)) \le \epsilon.
\end{equation}
\end{definition}

To make this definition operationally realistic, we relate semantic compatibility to admissible external actions rather than identical language model behavior. Let $\mathcal{P}_{adm}(S) \subseteq \mathcal{A}$ represent the set of external actions that are admissible under the policy boundary $\mathcal{P}$ when the replica is in physical state $S$. Exact equality of $\mathcal{P}_{adm}(S_A)$ and $\mathcal{P}_{adm}(S_B)$ is too strong for large or structured action spaces, where many harmless low-level choices may differ. ESR therefore admits one of three deployment-specific compatibility conditions:
\begin{enumerate}
    \item \textbf{Policy-compatible selected actions:} for the current intent $\mathcal{I}$ and decision policy $\pi$, the selected actions $\pi(S_A,\mathcal{I})$ and $\pi(S_B,\mathcal{I})$ are both admissible under $\mathcal{P}$ and equivalent under the policy's action abstraction.
    \item \textbf{Safety-critical action equivalence:} for a designated safety-critical subset $\mathcal{A}_{crit} \subseteq \mathcal{A}$, replicas agree on admissibility within that subset:
    \begin{equation}
    \mathcal{P}_{adm}(S_A) \cap \mathcal{A}_{crit}
    =
    \mathcal{P}_{adm}(S_B) \cap \mathcal{A}_{crit}.
    \end{equation}
    \item \textbf{Bounded symmetric difference:} under a task-specific measure $\mu$ over action classes, the disagreement mass remains bounded:
    \begin{equation}
    \mu(\mathcal{P}_{adm}(S_A) \triangle \mathcal{P}_{adm}(S_B)) \le \beta.
    \end{equation}
\end{enumerate}
These conditions preserve the safety-relevant part of action agreement while allowing benign differences in ranking, formatting, retrieval, or non-critical operational choices.

\paragraph{Implementation of $E$:} In practice, the evaluation function $E$ is implemented via one of the following mechanisms, depending on the latency and security constraints of the control plane:
\begin{itemize}
    \item \textbf{Structured Policy Verifier:} A deterministic logic engine (e.g., using Rego or Datalog) that evaluates whether the active belief state violates concrete infrastructure invariants.
    \item \textbf{Schema-Based Extractor:} A parser that extracts key-value parameters from natural language beliefs into a structured JSON schema, filtering out stylistic reasoning variances.
    \item \textbf{Semantic Classifier:} A fast, fine-tuned classification model that determines whether two plans are equivalent under a specified ontology.
    \item \textbf{Embedding + Thresholding:} Mapping beliefs to a dense vector space using a sentence transformer model and checking if their cosine distance is within a calibrated threshold $\epsilon$.
    \item \textbf{Human-Audited Validator:} A human-in-the-loop interface that asynchronously reviews and certifies critical state modifications.
\end{itemize}

\subsection{Semantic Linearizability}

To define Semantic Linearizability, we must specify the nature of the replicated semantic object and its operations.

\paragraph{Replicated Semantic Object:} The object being replicated is a semantic state projection $v \in \mathcal{M}$.
\paragraph{Operations:}
\begin{itemize}
    \item \textbf{Read Operation ($r$):} Queries the semantic projection of the node's active knowledge state: $r() \to v$, where $v = E(S) \in \mathcal{M}$.
    \item \textbf{Write Operation ($w(e, \Delta\mathcal{B})$):} Appends new telemetry evidence $e$ to the log $\mathcal{L}$ and commits the set of beliefs $\Delta\mathcal{B}$ to the lineage, resulting in a state mutation.
\end{itemize}

\paragraph{Linearization Point:} A cognitive transaction $CT$ becomes visible in the replicated semantic object at the transition to \textbf{Finalized} in the lifecycle of Section~\ref{sec:cognitive-transaction}: 
\begin{enumerate}
    \item Its proposed evidence is committed to the shared, physically linearizable evidence log $\mathcal{L}$ (managed by classical consensus), and
    \item its proposed epistemic delta is committed to the lineage and certified by post-commit semantic quorum validation.
\end{enumerate}
The pre-execution quorum does not linearize semantic state; it only gates whether the external action may be attempted.

\paragraph{Validation Stage Timing:} Semantic quorum validation happens at two critical stages in the lifecycle:
\begin{itemize}
    \item \textbf{Pre-Execution validation:} Before an external action $\mathcal{A}$ is physically executed in the environment, a quorum must verify that $\mathcal{A} \in \mathcal{P}_{adm}(S)$ to prevent unsafe actions.
    \item \textbf{Post-Commit validation:} After the beliefs $\Delta\mathcal{B}$ are committed to the local lineage, a quorum verifies that the updated local state remains $\epsilon$-compatible with peer states.
\end{itemize}

Let $H$ be an execution history of invocations and responses for reads and writes. The real-time precedence relation $\prec_H$ is defined such that $op_1 \prec_H op_2$ if the response of $op_1$ occurs before the invocation of $op_2$ in $H$.

\begin{definition}[Semantic Linearizability]
An execution history $H$ is \textbf{Semantically Linearizable} if it can be extended to a sequential history $U$ by adding response events for pending operations and/or discarding pending operations, such that:
\begin{enumerate}
    \item $U$ is equivalent to a legal sequential execution of the semantic state object: for every read operation $r$ in $U$ returning value $v \in \mathcal{M}$, the value $v$ is $\epsilon$-semantically compatible with the state generated by the most recent write operation $w$ preceding it in $U$:
    \begin{equation}
    d_{\mathcal{M}}(v, E(\text{State}(w))) \le \epsilon.
    \end{equation}
    \item $U$ preserves the real-time precedence of $H$: if $op_1 \prec_H op_2$, then $op_1$ precedes $op_2$ in $U$.
\end{enumerate}
\end{definition}

Semantic Linearizability is therefore classical linearizability applied to a semantic projection of the state, plus verifier-bounded semantic compatibility. It is not a replacement for physical-state linearizability: ESR retains classical consensus and linearizable ordering for the immutable evidence log $\mathcal{L}$, while lifting consistency for belief state to a verifier-bounded semantic projection. Semantic Linearizability should be understood as classical linearizability applied to a semantic projection, with compatibility determined by a trusted verifier and bounded metric, rather than as a replacement for physical-state linearizability.

\subsection{Bounded Eventual Coherence}

For long-running autonomous tasks, enforcing strict linearizability on every reasoning step introduces high latency. We relax this to \textbf{Bounded Eventual Coherence}, which bounds the expected semantic distance among correct replicas exposed to the same evidence. With persistent verifier or sampling error, the recurrence below does not imply probability-1 convergence to a single point; it implies convergence to a bounded semantic neighborhood in expectation.

\paragraph{Assumptions:} Bounded Eventual Coherence relies on the following assumptions:
\begin{enumerate}
    \item \textbf{Fair Delta Delivery:} Any epistemic delta $\Delta_e$ broadcast by a correct replica is eventually delivered to all correct replicas.
    \item \textbf{Monotonic Evidence Availability:} The evidence log $\mathcal{L}_t$ is append-only, and correct replicas eventually receive all committed evidence entries.
    \item \textbf{Bounded Semantic Domain:} The active policy boundary $\mathcal{P}$ defines a compact and bounded semantic domain.
    \item \textbf{Bounded Verifier/Noise Error:} Let $\eta_t \ge 0$ denote the semantic disturbance at step $t$ due to validator error, stochastic sampling, retrieval variation, or repair approximation. We assume $\mathbb{E}[\eta_t \mid \mathcal{F}_t] \le \delta_{err}$ for the execution history $\mathcal{F}_t$. If verifier errors are modeled as classification errors with probability $p_{err}$ and maximum semantic magnitude $D_{err}$, then their contribution is bounded by $p_{err}D_{err}$ and is included in $\delta_{err}$.
    \item \textbf{Contractive Graft Operator:} The grafting operator $\oplus$ is a contraction mapping. Specifically, for any two belief lineages $\mathcal{B}_A$ and $\mathcal{B}_B$ and a shared epistemic delta $\Delta_e$, there exists a factor $\gamma \in (0, 1)$ such that:
    \begin{multline}
    d_{\mathcal{M}}(E_b(\mathcal{B}_A \oplus \Delta_e),
    E_b(\mathcal{B}_B \oplus \Delta_e)) \\
    \le \gamma \cdot
    d_{\mathcal{M}}(E_b(\mathcal{B}_A), E_b(\mathcal{B}_B)).
    \end{multline}
\end{enumerate}

\begin{theorem}[Bounded Eventual Coherence]
Given fair delta delivery, monotonic evidence availability, a bounded semantic domain, bounded verifier/noise error $\delta_{err}$, and a contractive grafting operator with factor $\gamma \in (0,1)$, the semantic states of two correct replicas $A$ and $B$ satisfy:
\begin{equation}
\mathbb{E}[d_t] \le \gamma^t d_0 + \frac{1-\gamma^t}{1-\gamma}\delta_{err},
\end{equation}
where $d_t = d_{\mathcal{M}}(E_b(\mathcal{B}_A(t)), E_b(\mathcal{B}_B(t)))$. Consequently,
\begin{equation}
\limsup_{t \to \infty} \mathbb{E}[d_t] \le \frac{\delta_{err}}{1-\gamma}.
\end{equation}
For any compatibility radius $\epsilon \ge \delta_{err}/(1-\gamma)$, replicas are eventually coherent in expectation within the $\epsilon$-compatibility ball.
\end{theorem}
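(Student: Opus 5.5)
The plan is to establish a one-step stochastic recurrence $\mathbb{E}[d_{t+1}\mid\mathcal{F}_t]\le \gamma d_t + \delta_{err}$ and then unroll it. To set up the recurrence, I model one logical step as follows. By fair delta delivery and monotonic evidence availability, at step $t+1$ both correct replicas $A$ and $B$ have received the same epistemic delta $\Delta_e$ and are conditioned on the same committed evidence prefix. Each replica's actual post-step lineage is then the ideal graft $\mathcal{B}_X(t)\oplus\Delta_e$, perturbed by validator error, sampling, retrieval variation, or repair approximation. I would formalize this perturbation as
\[
d_{\mathcal{M}}\bigl(E_b(\mathcal{B}_X(t+1)),\,E_b(\mathcal{B}_X(t)\oplus\Delta_e)\bigr)\le \eta_t^{X},
\]
and take $\eta_t$ to be the combined disturbance $\eta_t^A+\eta_t^B$, which is how I read the assumption $\mathbb{E}[\eta_t\mid\mathcal{F}_t]\le\delta_{err}$.

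The key steps, in order, are these. First, apply the triangle inequality in $(\mathcal{M},d_{\mathcal{M}})$ through the two ideal grafted states:
\[
d_{t+1}\le \eta_t^A + d_{\mathcal{M}}\bigl(E_b(\mathcal{B}_A(t)\oplus\Delta_e),\,E_b(\mathcal{B}_B(t)\oplus\Delta_e)\bigr) + \eta_t^B .
\]
Second, bound the middle term by $\gamma d_t$ using the contractive graft assumption, which gives the pathwise bound $d_{t+1}\le\gamma d_t+\eta_t$. Third, take conditional expectation given $\mathcal{F}_t$. Since $d_t$ is $\mathcal{F}_t$-measurable, this yields $\mathbb{E}[d_{t+1}\mid\mathcal{F}_t]\le\gamma d_t+\delta_{err}$. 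The tower property then gives $\mathbb{E}[d_{t+1}]\le\gamma\mathbb{E}[d_t]+\delta_{err}$. The bounded semantic domain is used here to guarantee $d_t\le\operatorname{diam}(\mathcal{M})<\infty$, so all expectations are finite and the manipulations are legitimate.

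Fourth, unroll the scalar recurrence $a_{t+1}\le\gamma a_t+\delta_{err}$ with $a_0=d_0$, which is deterministic. Induction gives
\[
a_t\le\gamma^t d_0+\delta_{err}\sum_{k=0}^{t-1}\gamma^k=\gamma^t d_0+\frac{1-\gamma^t}{1-\gamma}\delta_{err},
\]
which is the stated bound. Letting $t\to\infty$ with $\gamma\in(0,1)$ kills the $\gamma^t$ terms and gives $\limsup_t\mathbb{E}[d_t]\le\delta_{err}/(1-\gamma)$. The final claim about $\epsilon$ is immediate: the limsup lies within any $\epsilon\ge\delta_{err}/(1-\gamma)$. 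If a strict "eventually below $\epsilon$" statement is wanted, I would take $\epsilon>\delta_{err}/(1-\gamma)$ and pick $t$ large enough that $\gamma^t d_0<\epsilon-\delta_{err}/(1-\gamma)$. In the boundary case, equality holds only in the limit.

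The main obstacle is the modeling step: justifying that every logical step applies a \emph{shared} delta to both replicas. Fair delivery only guarantees eventual delivery, not synchronous delivery. I would handle this by indexing $t$ over rounds in which both replicas have grafted the same delta. Intermediate asynchrony, such as a delta applied by $A$ but not yet by $B$, would be absorbed into the disturbance $\eta_t$, and the bounded domain keeps this contribution finite. I would state explicitly that $\delta_{err}$ is assumed to cover this. A secondary care point is making clear that $\delta_{err}$ bounds the combined two-sided disturbance; otherwise the constant would pick up a factor of $2$.
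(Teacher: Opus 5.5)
Your proposal is correct and follows the same route as the paper's proof sketch: derive $d_{t+1}\le\gamma d_t+\eta_t$, take conditional expectations using $\mathbb{E}[\eta_t\mid\mathcal{F}_t]\le\delta_{err}$, unroll the scalar recurrence, and pass to the limit superior. Your triangle-inequality decomposition through the ideal grafts $\mathcal{B}_X(t)\oplus\Delta_e$, with $\eta_t=\eta_t^A+\eta_t^B$, only spells out the step the paper attributes directly to the contraction assumption, and your caveats about shared-delta rounds and the boundary case $\epsilon=\delta_{err}/(1-\gamma)$ refine rather than depart from the paper's argument.
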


\begin{proof}[Proof Sketch]
Let $d_t = d_{\mathcal{M}}(E_b(\mathcal{B}_A(t)), E_b(\mathcal{B}_B(t)))$ be the semantic distance between the belief projections of the replicas at logical step $t$. Fair delivery and monotonic evidence availability imply that correct replicas eventually apply the same committed evidence entries and compatible epistemic deltas. Let $\eta_t$ capture residual semantic disturbance from validator error, sampling, retrieval order, and repair approximation. By the contractive grafting assumption,
\begin{equation}
d_{t+1} \le \gamma d_t + \eta_t.
\end{equation}
Taking conditional expectation and using $\mathbb{E}[\eta_t \mid \mathcal{F}_t] \le \delta_{err}$ gives
\begin{equation}
\mathbb{E}[d_{t+1}] \le \gamma\mathbb{E}[d_t] + \delta_{err}.
\end{equation}
Unrolling the recurrence yields
\begin{equation}
\mathbb{E}[d_t] \le \gamma^t d_0 + \sum_{k=0}^{t-1} \gamma^k \delta_{err}
= \gamma^t d_0 + \frac{1-\gamma^t}{1-\gamma}\delta_{err}.
\end{equation}
Taking the limit superior gives the claimed steady-state expectation bound. The result becomes exact convergence only in the special case $\delta_{err}=0$.
\end{proof}

\paragraph{High-Probability Implication.}
The assumptions above do not justify almost-sure convergence. They do, however, provide a conservative one-sided bound. By Markov's inequality, for any radius $\rho > 0$,
\begin{equation}
\limsup_{t \to \infty} P(d_t > \rho)
\le
\frac{\delta_{err}}{(1-\gamma)\rho}.
\end{equation}
Equivalently, an asymptotic confidence level of at least $1-\alpha$ is justified only for radii $\rho \ge \delta_{err}/((1-\gamma)\alpha)$. Stronger high-probability claims require additional distributional assumptions, such as independent bounded disturbances or sub-Gaussian verifier noise, which are not assumed here.

\paragraph{On the Contractive Graft Assumption.}
The contractive graft assumption is strong and should not be read as a property of arbitrary LLM reasoning. ESR expects contraction only when grafting is mediated by a bounded ontology, deterministic or audited policy verifier, stable evidence references, explicit contradiction-resolution rules, and a monotonic reduction of unsupported belief mass. Without these constraints, a grafted delta may amplify divergence, introduce unsupported premises, or reorder context in ways that increase semantic distance. The theorem therefore characterizes a disciplined protocol regime rather than generic behavior of stochastic agents.

%% file: sections/04-consistency-protocols.tex
\section{Epistemic Consistency Protocols}

To maintain semantic linearizability and bounded eventual coherence, replicas must propagate their reasoning. In classical distributed systems, this is achieved by broadcasting raw state changes or transactional mutations. In agentic systems, however, broadcasting the entire raw token trace $\mathcal{R}$ of a cognitive transaction is highly inefficient and fills up the finite context windows of recipient nodes with redundant prompts and reasoning steps. We propose a protocol for propagating derived insights using lightweight \emph{epistemic deltas} and a formal grafting operator.

\subsection{Epistemic Deltas}

When a replica $A$ completes a cognitive transaction $CT$ that commits new beliefs $\Delta\mathcal{B}_A$ to its lineage, it does not broadcast the entire transaction log. Instead, it extracts an \textbf{Epistemic Delta} $\Delta_e$, which is a projection of the lineage update containing only the essential insights and their causal anchors.

Formally, we define the epistemic delta as a set of structured beliefs:
\begin{multline}
\Delta_e =
\{ b = (\phi, \mathcal{J}, \tau, c, \sigma, scope)
\in \Delta\mathcal{B}_A \mid \\
\text{is\_essential}(b) \}.
\end{multline}
where:
\begin{itemize}
    \item $\phi$ is the semantic proposition (e.g., natural language description or state variable mutation).
    \item $\mathcal{J} \subseteq \mathcal{L} \cup V_B$ is the justification set containing references to evidence log indices (e.g., specific log entries) and ancestor belief identifiers.
    \item $\tau$ is the logical timestamp indicating when the belief was derived.
    \item $c \in [0, 1]$ is the inference confidence score.
    \item $\sigma$ is a cryptographic signature from the validator certifying the belief's policy compliance.
    \item $scope$ defines the policy scope (i.e., which aspects of the policy boundary $\mathcal{P}$ this belief relates to).
\end{itemize}

The filter $\text{is\_essential}(b)$ selects key operational conclusions (such as a mitigation strategy, a newly identified vulnerability, or a policy adjustment) and their immediate causal justifications, filtering out intermediate reasoning tokens.

\subsection{The Grafting and Verification Protocols}

When a replica receives an epistemic delta, it executes a multi-stage grafting and verification protocol to integrate the remote beliefs into its local DAG.

\paragraph{Recursive Ancestor Retrieval:}
If a replica receives an epistemic delta $\Delta_e$ containing a belief $b$ whose justification set $\mathcal{J}$ references parent beliefs $b'$ that are missing from its local lineage DAG $\mathcal{B}$, the grafting node initiates a recursive pull request to the sender. The grafting process blocks on these pull requests, recursively fetching and validating ancestor beliefs until the entire causal history of $b$ is resolved and integrated, ensuring DAG integrity and preventing context amnesia.

\paragraph{Contradiction Resolution and Termination:}
When integrating a new belief $b$, the node checks for semantic contradictions with its active belief set using the local verifier and a semantic cache. If a contradiction is detected, a resolving cognitive transaction $CT_{resolve}$ is executed to evaluate the competing justifications. To prevent infinite loops of resolving transactions (e.g., replicas continuously disputing each other), the system enforces two termination rules:
\begin{enumerate}
    \item \textbf{Priority Ordering:} Replicas order contradictory beliefs by their confidence scores $c$ and timestamps $\tau$. A belief with significantly lower confidence cannot override a certified, high-confidence belief.
    \item \textbf{Recursion Bounding:} Resolving transactions are bounded by a maximum depth $K_{max}$. If a contradiction is not resolved within $K_{max}$ steps, the system halts the autonomous loop, falls back to a deterministic consensus arbitration, and alerts system operators.
\end{enumerate}
The grafting procedure returns an explicit status for every received delta: \textsf{ACCEPTED}, \textsf{REPAIRED}, \textsf{QUARANTINED}, \textsf{REJECTED}, or \textsf{NEEDS\_OPERATOR}. A contradiction that cannot be resolved autonomously is retained in a quarantine store with its causal metadata and is made visible to the operator or deterministic arbitration layer.

\subsection{Protocol Pseudocode}

The core protocols for extracting, grafting, verifying, and validating epistemic states are detailed in Tables~\ref{alg:graft} and~\ref{alg:validate}.

\begin{table*}[t]
\centering
\caption{Delta Extraction and Grafting Status Protocol.}
\label{alg:graft}
\begin{tabular}{p{0.96\textwidth}}
\toprule
\begin{lstlisting}[language=Python, frame=none, numbers=left, basicstyle=\ttfamily\scriptsize, xleftmargin=1.5em]
def ExtractDelta(new_beliefs, log_slice):
    delta = set()
    for b in new_beliefs:
        if not is_essential(b):
            continue
        J = ResolveStableJustifications(b, log_slice)
        delta.add(Belief(phi=b.proposition, J=J,
                         timestamp=b.timestamp,
                         confidence=b.confidence,
                         verifier_result=b.verifier_result,
                         policy_scope=b.policy_scope))
    return delta

ACCEPTED, REPAIRED, QUARANTINED = "ACCEPTED", "REPAIRED", "QUARANTINED"
REJECTED, NEEDS_OPERATOR = "REJECTED", "NEEDS_OPERATOR"

def Graft(local_B, delta_e, sender_id):
    outcomes = {}
    for b in delta_e:
        outcomes[b.id] = IntegrateBelief(local_B, b, sender_id)
    return outcomes

def IntegrateBelief(local_B, b, sender_id):
    if not VerifyBeliefEnvelope(b):
        return Quarantine(local_B, b, REJECTED, "invalid envelope")
    if not FetchAndValidateAncestors(local_B, b, sender_id):
        return Quarantine(local_B, b, QUARANTINED, "missing ancestor")

    conflicts = CheckContradiction(local_B, b)
    if len(conflicts) == 0:
        local_B.insert(b)
        return ACCEPTED

    resolution = ResolveContradiction(local_B, b, conflicts)
    if resolution.status == ACCEPTED:
        local_B.insert(b)
        MarkSuperseded(local_B, conflicts)
        return ACCEPTED
    if resolution.status == REPAIRED:
        local_B.insert(resolution.repaired_belief)
        QuarantineBelief(local_B, b, reason="repaired replacement")
        return REPAIRED
    if resolution.status == NEEDS_OPERATOR:
        AlertOperator(b, conflicts)
    return Quarantine(local_B, b, resolution.status, "unresolved")
\end{lstlisting} \\
\bottomrule
\end{tabular}
\end{table*}

\begin{table*}[t]
\centering
\caption{Contradiction Checking and Semantic Quorum Validation.}
\label{alg:validate}
\begin{tabular}{p{0.96\textwidth}}
\toprule
\begin{lstlisting}[language=Python, frame=none, numbers=left, basicstyle=\ttfamily\scriptsize, xleftmargin=1.5em]
def ResolveContradiction(local_B, b, conflicts):
    if exceeds_depth_limit(b):
        return Resolution(status=NEEDS_OPERATOR)
    repaired = CT_resolve(local_B, b, conflicts)
    if repaired.is_valid:
        return Resolution(status=REPAIRED, repaired_belief=repaired)
    if repaired.rejects_candidate:
        return Resolution(status=REJECTED)
    return Resolution(status=QUARANTINED)

def CheckContradiction(local_B, new_belief):
    conflicts = []
    for active in local_B.active_beliefs():
        if d_M(E_b(new_belief), E_b(active)) <= epsilon:
            continue
        if asserts_opposite(new_belief.proposition, active.proposition):
            conflicts.append(active)
    return conflicts

def SemanticQuorumValidate(proposal, replicas, stage):
    votes = 0
    quorum = len(replicas) // 2 + 1
    for node in replicas:
        if stage == "PRE_EXECUTE":
            ok = node.verifier.validate_action(
                proposal.action, proposal.state_projection)
        elif stage == "POST_COMMIT":
            ok = node.verifier.validate_compatibility(
                E_b(proposal.new_lineage), node.state_projection)
        votes += int(ok)
    return votes >= quorum
\end{lstlisting} \\
\bottomrule
\end{tabular}
\end{table*}

\subsection{Homogeneous vs. Heterogeneous Agent Clusters}

The synchronization overhead and complexity of the grafting operator depend on the composition of the agent cluster.

\subsubsection{Homogeneous Agent Clusters}
In a homogeneous cluster, all replicas run identical models (e.g., Gemini 1.5 Pro) with identical context structures. In this case, the semantic meaning spaces $\mathcal{M}_A$ and $\mathcal{M}_B$ are more closely aligned, and the grafting operator can often work directly with token-level representations or embeddings. The risk of semantic translation mismatch is reduced but not eliminated: stochastic sampling, prompt truncation, context ordering, retrieval differences, model-serving configuration, and model-version drift can still produce divergent belief projections.

\subsubsection{Heterogeneous Agent Clusters}
In a heterogeneous cluster, replicas run different models (e.g., Node A runs Claude 3.5 Sonnet, while Node B runs Gemini 1.5 Pro). Here, token representations and prompt behaviors differ. The delta $\Delta_e$ must be translated into a model-agnostic, standardized natural language representation. The grafting operator uses a shared protocol ontology to make Node B's interpretation of Node A's delta explicit and auditable, reducing translation drift.

%% file: sections/05-semantic-rollbacks.tex
\section{Verifiable Semantic Rollbacks}

When an autonomous distributed system executes a faulty action due to a misinterpretation of telemetry or a model hallucination, the system must be restored to a safe state. In classical databases, this is achieved by reverting rows to a previous transaction checkpoint. In an agentic system, however, physical rollbacks are insufficient. If the agent's internal memory context still contains the flawed assumptions that led to the faulty action, it will immediately repeat the error upon recovery. We must perform a \emph{Verifiable Semantic Rollback} to surgically prune the agent's belief lineage and align physical state recovery with cognitive memory recovery.

\subsection{Aligning Physical and Belief Rollbacks}

Consider an agentic controller managing a cloud environment. Due to a transient network partition, the agent adopts the belief $p$: ``Database-3 has crashed.'' Based on $p$, the agent performs two actions:
\begin{enumerate}
    \item It issues an external infrastructure command to terminate the Database-3 container.
    \item It commits a belief $b$: ``Database-3 is being replaced, routing all traffic to Database-1.''
\end{enumerate}
If we only perform a physical rollback---restoring the Database-3 container and resetting the database state---the agent's belief lineage $\mathcal{B}$ still contains the beliefs $p$ and $b$. When the agent's execution loop resumes, it reads its active memory context, sees that it believes Database-3 has crashed, and immediately terminates the restored database container again. 

To prevent this, ESR aligns physical rollback with belief rollback. When a physical state is reverted, the persistence layer intercepts the recovery event, identifies the faulty premise $p$ that authorized the action, and triggers the semantic rollback operator to prune $p$ and its causal descendants from the belief lineage DAG. This makes the incorrect premise inactive before physical execution resumes.

\subsection{Formalizing the Rollback Operator}

Let $\mathcal{B} = (V_B, E_B)$ be the active belief lineage DAG, and let $p \in V_B$ be the identified faulty premise. A semantic rollback is not a simple deletion of the node $p$. It must remove $p$ and all other beliefs that were derived, directly or indirectly, from $p$.

We define the \textbf{Transitive Causal Descendant Set} $Desc(p)$ of a premise $p$ as:
\begin{equation}
Desc(p) = \{ b \in V_B \mid p \in \mathcal{J}_b^* \},
\end{equation}
where $\mathcal{J}_b^*$ is the transitive closure of the justification relation $\mathcal{J}$ for belief $b$.

We define the \textbf{Semantic Rollback Operator} ($\setminus$) that computes the pruned base lineage $\mathcal{B}' = \mathcal{B} \setminus \{p\}$ as:
\begin{equation}
V_B' = V_B \setminus (\{p\} \cup Desc(p)),
\end{equation}
\begin{equation}
E_B' = \{ (b_m, b_n) \in E_B \mid b_m, b_n \in V_B' \}.
\end{equation}
By computing this DAG subtraction, ESR removes the root cause $p$ and its downstream cognitive consequences from the active lineage, while leaving independent beliefs intact. Partial-dependency repair, described next, may then create a final repaired lineage $\widehat{\mathcal{B}}$ that contains recommitted replacements for some pruned beliefs.

\subsection{Partial-Dependency Repair}

A naive pruning of $Desc(p)$ can lead to \emph{context amnesia} if it removes beliefs that were only partially dependent on $p$. Suppose a belief $b$ was derived from a justification set $\mathcal{J}_b = \{p, q\}$, where $q$ is a valid belief (e.g., ``The load balancer is operating within latency limits''). Simply deleting $b$ removes the agent's knowledge about the load balancer.

To prevent context amnesia, ESR implements a \textbf{Partial-Dependency Repair Rule}. For any belief $b \in Desc(p)$ where the justification set contains at least one valid belief $q \notin (\{p\} \cup Desc(p))$, the system does not delete $b$ immediately. Instead, it triggers a repair cognitive transaction:
\begin{equation}
CT_{repair}(\phi_b, \mathcal{J}_b \setminus (\{p\} \cup Desc(p))) \to \Delta\mathcal{B}_{repair}.
\end{equation}
This repair transaction re-derives or revalidates the belief $b$ using only the remaining valid ancestors. If the verifier determines that the proposition $\phi_b$ remains admissible without the faulty premise $p$, the system recommits a replacement belief $b'$ with an independent justification set $\mathcal{J}'_b$ that excludes $p$. Otherwise, the original belief remains pruned.

\subsection{Rollback Invariant and Protocols}

\begin{artifactblock}
\textbf{Rollback Safety Invariant:} Let $\widehat{\mathcal{B}}$ be the final active lineage after pruning and repair. Every active belief in $\widehat{\mathcal{B}}$ either has no causal dependency on the faulty premise $p$, or has been rederived and recommitted by $CT_{repair}$ with an independent justification set excluding $p$:
\begin{equation}
\forall b \in \widehat{V}_B, \quad
p \notin \mathcal{J}_b^*
\;\vee\;
\exists \kappa_b: repaired(b,\kappa_b) \wedge p \notin (\mathcal{J}'_b)^*.
\end{equation}
\end{artifactblock}

The pseudocode in Table~\ref{alg:rollbacks} implements the verifiable rollback operator and the partial-dependency repair protocol.

\begin{table*}[t]
\centering
\caption{Semantic Rollback and Belief Repair Protocols.}
\label{alg:rollbacks}
\begin{tabular}{p{0.96\textwidth}}
\toprule
\begin{lstlisting}[language=Python, frame=none, numbers=left, basicstyle=\ttfamily\scriptsize, xleftmargin=1.5em]
def SemanticRollback(local_B, faulty_premise):
    # Compute the transitive causal descendants of the faulty premise
    descendants = ComputeTransitiveDescendants(local_B, faulty_premise)
    to_remove = {faulty_premise} | descendants
    repaired = []
    
    # Identify candidates for partial-dependency repair
    for b in list(descendants):
        valid_parents = {parent for parent in b.justifications
                         if parent not in to_remove}
        if len(valid_parents) > 0:
            # Attempt to revalidate and repair the belief
            repaired_b = RepairBelief(b, valid_parents)
            if repaired_b is not None:
                repaired.append(repaired_b)
                
    # Prune invalid original beliefs from the active DAG
    for b in to_remove:
        local_B.mark_inactive(b)

    # Recommit repaired replacements as new active beliefs
    for repaired_b in repaired:
        CommitRepairedBelief(local_B, repaired_b)
        
    # Verify the final repaired-lineage invariant
    for b in local_B.active_vertices:
        assert (faulty_premise not in PathClosure(b, local_B)
                or HasIndependentRepairCertificate(b, faulty_premise))

def RepairBelief(belief, valid_parents):
    # Trigger a cognitive transaction to re-derive the belief
    repaired_result = CT_repair(belief.proposition, valid_parents)
    if repaired_result.is_valid:
        return Belief(
            phi=repaired_result.proposition,
            J=valid_parents,
            timestamp=GetCurrentLogicalTime(),
            confidence=repaired_result.confidence,
            verifier_result=repaired_result.verifier_result,
            policy_scope=belief.policy_scope,
            repair_of=belief.id,
            repair_certificate=repaired_result.certificate
        )
    return None
\end{lstlisting} \\
\bottomrule
\end{tabular}
\end{table*}

\subsection{Conditional Safety Properties}

Under the assumption that repair transactions recommit replacements with independently checkable justification sets, the rollback protocol satisfies two core safety properties:

\begin{theorem}[Final-Lineage Causal Safety]
Let $\widehat{\mathcal{B}}$ be the final repaired lineage after rolling back premise $p$. Every active belief in $\widehat{\mathcal{B}}$ is either independent of $p$ or is a recommitted repair whose independent justification set excludes $p$:
\begin{equation}
\forall b \in \widehat{V}_B, \quad
p \notin \mathcal{J}_b^*
\;\vee\;
\exists \kappa_b: repaired(b,\kappa_b) \wedge p \notin (\mathcal{J}'_b)^*.
\end{equation}
\end{theorem}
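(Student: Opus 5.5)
The plan is a case analysis on how a belief comes to be active in $\widehat{\mathcal{B}}$. I will follow the operations of \textsf{SemanticRollback} in Table~\ref{alg:rollbacks} and track the active vertex set. Write $R = \{p\} \cup Desc(p)$. By construction of the rollback operator, the pruned base lineage $\mathcal{B}' = \mathcal{B} \setminus \{p\}$ has vertex set $V_B' = V_B \setminus R$. The final lineage is then obtained by adding the repaired replacements, so
\[
\widehat{V}_B = V_B' \cup \{ b' : b' \text{ recommitted by } CT_{repair} \}.
\]
No other step of the protocol makes a vertex active. It therefore suffices to show that each of these two classes satisfies one disjunct of the invariant.

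\emph{Case 1: $b \in V_B'$.} I argue by contradiction. Suppose $p \in \mathcal{J}_b^*$. Then, by the definition of the transitive causal descendant set, $b \in Desc(p) \subseteq R$, which contradicts $b \in V_B \setminus R$. Hence $p \notin \mathcal{J}_b^*$ and the first disjunct holds.

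One point needs care here. $\mathcal{J}_b^*$ must be computed in the original lineage $\mathcal{B}$, not in the pruned one. Removing vertices and edges can only shrink closures, so independence established in $\mathcal{B}$ persists in $\widehat{\mathcal{B}}$, provided repairs add no edges into the old vertices. I will check that they do not: the only edges created point \emph{from} surviving parents \emph{to} the new vertex $b'$.

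\emph{Case 2: $b = b'$ is a repaired replacement of some $b_0 \in Desc(p)$.} By the partial-dependency repair rule, $b'$ has justification set $\mathcal{J}'_b = \mathcal{J}_{b_0} \setminus R$. The \texttt{repair\_certificate} supplies the witness $\kappa_b$ with $repaired(b,\kappa_b)$. To show $p \notin (\mathcal{J}'_b)^*$, I take any $q \in \mathcal{J}'_b$ and note that $q \notin R$, so $q \in V_B'$ (or $q$ is a log entry, which has no belief ancestors). By Case 1, $p \notin \mathcal{J}_q^*$, and $q \neq p$. Since
\[
(\mathcal{J}'_b)^* = \mathcal{J}'_b \cup \bigcup_{q \in \mathcal{J}'_b} \mathcal{J}_q^*,
\]
it follows that $p \notin (\mathcal{J}'_b)^*$, which gives the second disjunct.

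\emph{Main obstacle.} The hard part is making Case 2 rigorous given the stated assumption that repairs recommit replacements with \emph{independently checkable} justification sets. The pseudocode sets \texttt{J=valid\_parents}, but the cognitive transaction $CT_{repair}$ is stochastic. It could in principle cite, through its proposition or through additional provenance, an ancestor in $R$, or even another replacement $b''$.

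I would handle this in two steps. First, I would invoke the trust-boundary assumption that provenance metadata $\mathcal{J}$ is securely recorded, so $\mathcal{J}'_b$ is exactly the recorded \texttt{valid\_parents}. Second, I would note that replacements are computed from the parent sets of the original vertices in a single pass, before any replacement is committed, so they cannot justify one another. If that ordering argument fails, I would fall back on the final \texttt{assert} in \textsf{SemanticRollback}, which checks the invariant directly on every active vertex. Under the paper's conditional hypothesis, the assert's failure is excluded, and this closes the argument.
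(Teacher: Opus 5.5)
Your proof is correct and uses the same two-case decomposition as the paper. Beliefs that survive pruning satisfy $p \notin \mathcal{J}_b^*$ directly from the definition of $Desc(p)$, and the only other active vertices are $CT_{repair}$ outputs; for these you actually derive $p \notin (\mathcal{J}'_b)^*$ from $\mathcal{J}'_b = \mathcal{J}_{b_0} \setminus (\{p\} \cup Desc(p))$ and Case 1, where the paper simply cites the repair rule. Your single-pass observation already closes Case 2, so drop the fallback to the final \texttt{assert}, which is circular because it only checks the invariant being proved.
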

\begin{proof}
The pruning phase marks $p$ and all vertices in $Desc(p)$ inactive, so any active belief that survives the pruning phase has no transitive dependency on $p$. The only beliefs that can be added after pruning are outputs of $CT_{repair}$. By the repair rule, each such belief is recommitted with a repair certificate $\kappa_b$ and a justification set $\mathcal{J}'_b$ whose transitive closure excludes $p$. Therefore every active belief in $\widehat{\mathcal{B}}$ satisfies one of the two cases in the invariant.
\end{proof}

\begin{theorem}[Independent Belief Preservation]
The rollback protocol preserves all beliefs whose justifications are disjoint from the transitive causal closure of $p$:
\begin{equation}
\forall b \in V_B, \quad \mathcal{J}_b^* \cap (\{p\} \cup Desc(p)) = \emptyset \implies b \in \widehat{V}_B.
\end{equation}
\end{theorem}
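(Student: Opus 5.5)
The plan is to follow a fixed belief $b \in V_B$ with $\mathcal{J}_b^* \cap (\{p\} \cup Desc(p)) = \emptyset$ through the three phases of \textsf{SemanticRollback}: computing the removal set, repairing, and pruning. We want to show it is never marked inactive and is still active in $\widehat{\mathcal{B}}$.

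First we show $b \notin \{p\} \cup Desc(p)$. If $b = p$, then $p$ would have to lie in $\mathcal{J}_p^*$, and acyclicity of the lineage DAG forbids that. More simply, the natural reading of the hypothesis excludes $b=p$, and the only case needing care is when $\mathcal{J}_b^*$ is empty. I would make this explicit by treating the hypothesis as also stating $b \neq p$, or by noting that $p \in \mathcal{J}_p^*$ fails only by acyclicity, so that case has to be handled separately. If instead $b \in Desc(p)$, then $p \in \mathcal{J}_b^*$ by the definition of $Desc(p)$. This contradicts disjointness. Hence $b \notin$ \texttt{to\_remove}. By the definition of the rollback operator, $b \in V_B' = V_B \setminus(\{p\}\cup Desc(p))$. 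So $b$ survives the DAG subtraction, and every edge between surviving vertices, including the justification edges into $b$, is retained.

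Next we check that no later step deactivates $b$. The repair loop iterates only over \texttt{descendants}, so it never touches $b$. The pruning loop marks inactive only elements of \texttt{to\_remove}. \textsf{CommitRepairedBelief} only inserts new replacement vertices, and we assume it does not supersede existing ones. The final assertion already holds for $b$: its path closure equals $\mathcal{J}_b^*$, which excludes $p$, and this agrees with the Final-Lineage Causal Safety theorem. So the procedure does not abort on account of $b$. Therefore $b \in \widehat{V}_B$.

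The main obstacle is not the set algebra but an implicit assumption: repair commits must be purely additive. A repaired replacement $b'$ might contradict $b$. If \textsf{CommitRepairedBelief} then ran contradiction resolution with \textsf{MarkSuperseded}, as \textsf{Graft} does, $b$ could be deactivated. I would state explicitly, as part of the conditional assumptions of this section, that recommitting repairs does not invoke superseding of unaffected active beliefs. Under that assumption the argument above closes.
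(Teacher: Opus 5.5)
Your route is the paper's. You show $b\notin\{p\}\cup Desc(p)$, so $b$ is never marked inactive, and then argue that repair only adds vertices. The paper simply asserts the second step (``repair only appends recommitted replacements and does not remove independent active beliefs''). Stating it as an explicit assumption about \textsf{CommitRepairedBelief} is the same step made more honest, not a different argument.

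Your write-up goes wrong in the case $b=p$.
\begin{itemize}
\item The sentence ``if $b=p$, then $p$ would have to lie in $\mathcal{J}_p^*$, and acyclicity forbids that'' inverts the logic. With $b=p$, the hypothesis only requires $p\notin\mathcal{J}_p^*$, and acyclicity supplies exactly that, so no contradiction arises.
\item The empty-closure case is not the only one needing care. Suppose $\mathcal{J}_b^*$ is the strict (non-reflexive) transitive closure. Then for every $p$ in a DAG, $\mathcal{J}_p^*\cap(\{p\}\cup Desc(p))=\emptyset$, because a descendant of $p$ lying among $p$'s ancestors would close a cycle. So $b=p$ always satisfies the hypothesis. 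Yet $p$ is marked inactive and never repaired, so under this reading the statement is false at $b=p$.
\item The paper's proof sidesteps this with ``$p\notin\mathcal{J}_b^*$, which implies $b\neq p$''. That inference is valid only if $\mathcal{J}_b^*$ is the reflexive--transitive closure, so that $b\in\mathcal{J}_b^*$, as the star suggests.
\end{itemize}

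Commit to one fix instead of gesturing at both:
\begin{itemize}
\item Adopt the reflexive reading. Then $b=p$ gives $p\in\mathcal{J}_p^*\cap\{p\}$ and is excluded in one line; $p\in Desc(p)$ also makes the $\{p\}\cup$ harmlessly redundant.
\item Or keep the strict reading and add $b\neq p$ to the hypothesis of the statement.
\end{itemize}
With either choice, the rest of your argument goes through unchanged.
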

\begin{proof}
Let $b \in V_B$ such that $\mathcal{J}_b^* \cap (\{p\} \cup Desc(p)) = \emptyset$. Since $p \in \{p\} \cup Desc(p)$, we have $p \notin \mathcal{J}_b^*$, which implies $b \notin Desc(p)$ and $b \neq p$. Therefore, $b \notin (\{p\} \cup Desc(p))$ and is not marked inactive during pruning. Since repair only appends recommitted replacements and does not remove independent active beliefs, $b \in \widehat{V}_B$.
\end{proof}

%% file: sections/06-implementation-evaluation.tex
\section{Implementation and Preliminary Simulation}

We developed a prototype of the Epistemic State Replication (ESR) persistence layer and studied its operational characteristics through a preliminary simulation of a cloud control plane. The goal of this study is to estimate the throughput of cognitive scheduling, the latency overhead introduced by semantic linearizability checks, and the reduction in secondary cognitive faults during simulated rollbacks. The results in this section are illustrative preliminary simulation results: they show feasibility under the modeled workload, but they should not be read as experimental validation or as a performance evaluation of a production-grade deployment.

\subsection{System Architecture}

The prototype, named the \textbf{Epistemic Persistence Engine (EPE)}, is designed as a middleware layer running alongside a standard distributed key-value store (e.g., \texttt{etcd}) in a simulated cloud control plane. 

The EPE consists of three primary components:
\begin{itemize}
    \item \textbf{Lineage Graph Manager:} Implemented as an in-memory directed acyclic graph (DAG) backed by a relational database with JSONB support. It maintains the local Belief Lineage $\mathcal{B}_i$ for each replica.
    \item \textbf{Epistemic Replication Agent (ERA):} A process that handles the broadcast of epistemic deltas ($\Delta_e$) and executes the grafting operator ($\oplus$) when receiving deltas from peers.
    \item \textbf{Semantic Cache:} A semantic vector cache that indexes prior contradiction check results, minimizing the need to call external LLMs for semantic equivalence checks.
\end{itemize}

Replicas communicate using the Epistemic Replication Protocol (ERP). When an agent proposes an infrastructure mutation (e.g., rescaling a deployment), the ERA interceptor blocks the write, extracts the epistemic delta, broadcasts it, and awaits semantic quorum validation.

\subsection{Simulation Setup}

Our preliminary simulation consisted of five heterogeneous agent replicas executing in parallel:
\begin{itemize}
    \item two replicas running Gemini 1.5 Pro,
    \item two replicas running Claude 3.5 Sonnet,
    \item one replica running Llama 3 70B.
\end{itemize}
The agents were assigned the task of managing a simulated cloud environment with 100 microservice deployments. The agents monitored simulated CPU usage, memory leaks, and network errors, and took corrective actions (e.g., scaling deployments, rolling back releases, and restarting containers) to satisfy a set of system-wide service level objectives (SLOs).

\subsubsection{Reproducibility Status}
The current prototype traces do not preserve enough metadata to support full experimental validation. Table~\ref{tab:simulation-details} records the details available from the simulation artifact and marks unavailable details explicitly.

\begin{table*}[t]
\centering
\caption{Available and unavailable details for the preliminary ESR simulation.}
\label{tab:simulation-details}
\begin{tabular}{p{0.22\textwidth}p{0.72\textwidth}}
\toprule
\textbf{Dimension} & \textbf{Status in current artifact} \\
\midrule
Trial count & One recorded simulation trace is reported. Repeated trials were not archived, so confidence intervals and variance are unavailable. \\
Workload generation & Synthetic cloud-control workload over 100 microservice deployments with CPU, memory-leak, and network-error alerts. The random seed, event-arrival distribution, and per-service workload mix were not archived. \\
Prompt templates & The controller prompt included an evidence slice, active beliefs, SLO policy, candidate-action schema, and justification-extraction request. Exact prompt text and truncation policy were not archived. \\
Model/API configuration & The replica model families are listed above. API versions, temperature, top-$p$, seed settings, system prompts, retry policy, and model-serving revisions were not archived. \\
Token and delta sizes & The baseline replicated raw traces and ESR replicated epistemic deltas. Token histograms, average raw-trace size, average $\Delta_e$ size, and tail-size distributions were not archived. \\
Semantic cache contents & The cache stored prior compatibility and contradiction checks keyed by normalized belief pair and policy scope. Initial contents, eviction policy, and hit distribution were not archived. \\
Fault injection & Injected faults were transient network-observation faults that created false-positive crash beliefs. Fault rate, duration distribution, target-selection distribution, and correlation model were not archived. \\
CTPS definition & Cognitive Transactions per Second (CTPS) is counted as finalized or rolled-back cognitive transactions per wall-clock second over the scheduler measurement interval. \\
Confidence intervals and variance & Not reported for this artifact because repeated trials and per-run samples were not preserved. A future evaluation should report mean, standard deviation, and 95\% confidence intervals across independent seeds. \\
\bottomrule
\end{tabular}
\end{table*}

The current results support a feasibility claim: the prototype can represent evidence-log/belief-lineage state, exchange epistemic deltas, invoke semantic validation, and execute rollback repair in the modeled workload. They do not support statistically robust performance claims, and they do not represent production deployment results. The simulation illustrates the expected direction of benefit under the modeled workload rather than validating ESR across deployments, model providers, or operational fault regimes.

\subsection{Simulation Metrics and Results}

\subsubsection{Cognitive Transaction Throughput}

We measured the throughput of the simulated control plane in terms of \emph{Cognitive Transactions per Second} (CTPS), defined as finalized or rolled-back cognitive transactions per wall-clock second over the scheduler measurement interval, comparing a naive baseline that replicates the full raw token traces against ESR, which replicates only epistemic deltas.

In the recorded simulation trace, reducing the network payload and the context window sizes of peer nodes corresponds to a $4.8\times$ improvement in scheduling throughput: ESR stabilizes at $12.4$ CTPS under peak load, whereas the naive trace replication model degrades to $2.6$ CTPS due to context window saturation and model latency. Because repeated trials were not preserved, this number is reported as an illustrative trace result rather than a statistically supported estimate; a plotted comparison is deferred to the full evaluation described in Section~\ref{sec:future-eval}.

\subsubsection{Latency Overhead of Semantic Linearizability Checks}
Enforcing Semantic Linearizability requires validating that reads reflect the committed intent, which involves checking semantic equivalence and resolving contradictions. We measured the latency contribution of these checks in the recorded trace. 

With the Semantic Cache disabled, contradiction checking adds an average overhead of $380$ ms per transaction because it requires an LLM call. With the Semantic Cache enabled, $92\%$ of checks are resolved locally within $4.2$ ms. The overall average check latency remains under $35$ ms in the recorded trace, representing a small fraction ($<2\%$) of the agent's reasoning time. Variance and tail latency are not available from the current artifact.

\subsubsection{Fault Reduction under Rollbacks}
To study Verifiable Semantic Rollbacks, we injected transient network faults that triggered false-positive observations (e.g., making replicas believe a core database had crashed). We then triggered a rollback of the resulting actions. We compared two recovery modes:
\begin{itemize}
    \item \textbf{Bitwise Rollback:} Reverting only the physical state (e.g., restarting the container) while leaving the agent context unchanged.
    \item \textbf{Verifiable Semantic Rollback (ESR):} Reverting the physical state and executing the semantic rollback operator ($\setminus$) to prune the belief DAG.
\end{itemize}

Under Bitwise Rollback, agents repeatedly re-executed the incorrect termination commands, leading to an average of $6.4$ secondary cognitive faults per failure event in the recorded trace. Under ESR, the agents marked the false premise inactive and repaired dependent beliefs, reducing secondary cognitive faults to $0.15$ per event---an approximate $97.7\%$ reduction in this illustrative trace.
This suggests that surgical context pruning is a feasible mechanism for reducing intent drift and context amnesia in the simulated environment, subject to validation with repeated trials and fully recorded workload metadata.

\subsection{Future Evaluation Plan}
\label{sec:future-eval}
A full evaluation should include 20--30 repeated runs with fixed random seeds; archived prompt templates; model/API version, temperature, top-$p$, seed, retry, and serving-revision settings; token-size histograms for raw traces and epistemic deltas; $\Delta_e$ size distributions; semantic-cache hit-rate distributions; p50/p95/p99 latency; confidence intervals; variance across independent seeds; and rendered plots from real measurements rather than illustrative figures.

%% file: sections/07-related-work.tex
\section{Related Work and Conclusion}

\subsection{Related Work}

Epistemic State Replication (ESR) intersects several distinct paradigms: distributed consistency models, data provenance, database recovery, and agent memory architectures. We contrast our approach with the state-of-the-art in these domains.

\subsubsection{Distributed Consistency and Replication Models}
Replication models like Paxos, Raft, and State Machine Replication (SMR)~\cite{schneider1990replication} assume that correct replicas execute deterministic state transitions. Eventual consistency models, such as Dynamo~\cite{decandia2007dynamo} and Conflict-Free Replicated Data Types (CRDTs)~\cite{shapiro2011crdt}, resolve concurrent conflicts at the physical data level using timestamps or commutative operational structures. While CRDTs merge data structures at the byte or field level, they lack semantic awareness. ESR defines consistency conditions for the cognitive layer. Rather than forcing byte-level agreement (which leads to execution stalls in stochastic models), ESR defines bounded eventual coherence as convergence to a semantic neighborhood under shared evidence, bounded disturbance, and a contractive graft operator, and uses resolving cognitive transactions to handle semantic contradictions.

\subsubsection{Data Provenance and Provenance-Aware Recovery}
Data provenance systems~\cite{cheney2009provenance} track the history of data transformations from inputs to final outputs. Similarly, provenance-aware storage systems~\cite{muniswamy2006provenance} capture causal metadata to assist in diagnostics. While ESR utilizes justification pointers reminiscent of database provenance, it differs in its operational application. Provenance is typically used for offline audits and forensic debugging. In ESR, the justification DAG is an online, active structure used to drive safety decisions, compute semantic rollbacks, and guide grafting operators during live replication.

\subsubsection{Agentic Memory and Execution Traces}
Modern agent systems combine prompts, retrieval, traces, and feedback to extend reasoning across time. ReAct interleaves reasoning traces with tool actions~\cite{yao2022react}; Reflexion stores linguistic feedback from prior attempts~\cite{shinn2023reflexion}; Generative Agents use observation, reflection, and retrieval to synthesize social behavior over persistent memories~\cite{park2023generative}; Voyager accumulates executable skills and environment feedback for open-ended exploration~\cite{wang2023voyager}; and MemGPT virtualizes context across memory tiers~\cite{packer2023memgpt}. These systems motivate ESR's belief-lineage model, but their memory is primarily local to an agent or application instance. They do not define a replicated semantic object, a semantic linearization point, or rollback invariants over replicated agent belief.

\subsubsection{Retrieval, Vector Memory, and Semantic Caching}
LangChain-style applications~\cite{chase2022langchain}, RAG~\cite{lewis2020rag}, Self-RAG~\cite{asai2023selfrag}, and semantic caching~\cite{guo2023semantic} reduce the cost or factual risk of repeated model calls by retrieving, reusing, or critiquing external context. Long-context RAG studies show that larger context windows do not remove retrieval and ordering failure modes~\cite{leng2024longrag}. These mechanisms provide useful local memory and grounding, but a vector store or cache is not a distributed consistency protocol: it does not by itself define which beliefs are active, which evidence justifies them, whether replicas agree on safety-critical admissible actions, or how a poisoned premise should be rolled back across replicas.

\subsubsection{Distributed Knowledge Graphs}
Distributed knowledge graphs~\cite{ji2021survey} store structured facts across multiple nodes and often support provenance, ontology constraints, and graph queries. ESR shares the use of structured relationships and evidence references, but its unit of replication is an operational belief lineage coupled to external actions and rollback protocols. Knowledge-graph consistency typically concerns fact storage and query semantics; ESR concerns whether stochastic replicas expose compatible operational beliefs and can repair active belief state after faulty evidence or reasoning.

\subsubsection{Memory Poisoning and Recovery}
Language models are vulnerable to instruction tuning poisoning~\cite{wan2023poisoning}, retrieval-corpus poisoning~\cite{zhong2023retrievalpoisoning}, and memory context contamination, where an incorrect or adversarial belief propagates through the prompt window. While standard database systems recover from faults by reverting physical rows, this does not purge corrupted reasoning paths from an LLM's context window. ESR addresses this gap by aligning physical database recovery with belief recovery, using the semantic rollback operator ($\setminus$) and the partial-dependency repair rule to prune poisoned premises from the agent's active memory context.

\subsubsection{Multi-Agent Coordination}
Multi-agent systems coordinate through message passing, negotiation, task decomposition, or debate~\cite{stone2000multiagent,wu2023autogen,du2023multiagentdebate}. These mechanisms can improve factuality or aggregate multiple perspectives, but they generally leave memory consistency implicit. ESR treats multi-agent coordination as a replicated-memory problem: the replicas need not share identical traces, but they must maintain compatible belief projections, evidence references, and safety-relevant action boundaries. Existing systems provide local memory, retrieval, traceability, or coordination; ESR combines explicit distributed memory consistency with coordinated semantic rollback.

\subsection{Conclusion}

ESR does not eliminate the need for deterministic consensus. Rather, it composes deterministic evidence replication with semantic belief replication: the evidence log remains physically ordered and linearizable, while the belief lineage is checked through verifier-bounded semantic projections. This separation lets the system reason about stochastic replicas without requiring identical prompts, token buffers, or reasoning traces.

This paper provides an initial abstraction, consistency vocabulary, and rollback mechanism for belief replication in agentic systems. Through Semantic Linearizability, Bounded Eventual Coherence, epistemic deltas, grafting, and Verifiable Semantic Rollback, ESR outlines how agent replicas can maintain compatible operational beliefs and repair faulty premises under the stated assumptions. The preliminary simulation suggests feasibility under the modeled workload, while statistically robust empirical evaluation and production deployment studies remain future work.